\title{Improve Variational AutoEncoder with \\ Auxiliary SOFTMAX MultiClassifier}
\author{
Yao Li \\
Institute of Big Data \\
Fudan University \\
\texttt{18110980007@fudan.edu.cn} \\
}
\begin{document}

\maketitle

\begin{abstract}
As a general-purpose generative model architecture, 
VAE has been widely used in the field of image and natural language processing. 
VAE maps high dimensional sample data into continuous latent variables with unsupervised learning. 
Sampling in the latent variable space of the feature, VAE can construct new image or text data. 
As a general-purpose generation model, 
the vanilla VAE can not fit well with various data sets and networks with different structures.
Because of the need to balance the accuracy of reconstruction 
and the convenience of latent variable sampling in the training process,
VAE often has problems known as ``posterior collapse'',
and images reconstructed by VAE are also often blurred.
In this paper, we analyze the main cause of these problem,
which is the lack of control over mutual information between the sample variable 
and the latent feature variable during the training process.
To maintain mutual information in model training,
we propose to use the auxiliary softmax multi-classification network structure 
to improve the training effect of VAE, named VAE-AS. 
We use MNIST and Omniglot data sets to test the VAE-AS model. 
Based on the test results, 
It can be show that VAE-AS has obvious effects on the mutual information adjusting 
and solving the posterior collapse problem.
\end{abstract}

\section{INTRODUCTION}
In recent years, generative model has played an important role in the field of machine learning 
such as natural language processing (NLP) and image processing.
Compared with supervised learning for pattern recognition and classification, 
Unsupervised and semi-supervised generation model is used more for image synthesis, automatic text response, etc.
Variational Autoencoder (VAE)~\citep{kingma2013auto} is a typical generator model, 
which encodes the observed samples by an encoder, then restore the observation sample through a decoder.
VAE's feature space has good properties, compared with observed variables, 
the compressed latent variables encoding has the effect of dimensionality reduction and noise immunity.
Latent variables capture and summarize the feature of the information better.
At the same time, VAE can also generate new images or text 
by sampling in latent variable space and restoring by the decoder.
VAE provides a good, scalable computing framework for tasks such as migration learning and more.

In the variational processing of VAE,
in order to solve the problem that the posterior distribution is unintegrable,
an approximation method of variational inference is used to calculate.
VAE suppose a simple prior distribution of latent variables,
and use KL-Divergence to compare the difference between the posterior and the prior distribution of the latent variable.
For machine learning, the requirement for prior distribution can be seen as a regular term,
which plays a role in simplifying calculations and optimizing the coding space.
There are two contradictory aspects in VAE's learning goals.
On the one hand, 
we require that the code generated by the encoder can objectively and truly reflect the distribution of actual data.
On the other hand,
we also require the distribution of the code to be as simple as possible so that we can easily generate new samples.
~\citep{bowman2015generating, chen2016variational} mentions the problem of the posterior latent variable collapse in VAE.
The decoder tends to ignore the latent variable $\rvz$ and generate $\rvx$ directly.
As mentioned in~\citep{dosovitskiy2016generating}, 
VAE tends to produce blurred images when using complex natural image datasets.

Now, the methods for solving these problems mainly include three types.
The first method is to cancel hypothesis of the simple normal prior distribution in VAE 
and instead use a more complex and accurate distribution to describe the latent code distribution.
When we know more about the actual data distribution, we can use a more explicit prior distribution, 
such as ~\citep{xu2018spherical} using the Mises-Fisher distribution instead of the standard normal prior distribution.
The MCMC method is more accurate than variational inference but requires multiple iterations,
how to combine the advantages of both and apply it to machine learning is proposed as a topic
~\citep{salimans2015markov,hoffman2017learning}.
The second type of approach is to start with the training process of the model.
In the early stages of model training, 
the encoder could not establish the association between the latend variable $\rvz$ and the observed variable $\rvx$,
this led to the occurrence of posterior latent variable collapse.
In these papers~\citep{sonderby2016train, higgins2017beta, burgess2018understanding},
the weight coefficient is added before the KL-Divergence term of VAE loss function, 
and an annealing mechanism is used to make the KL-Divergence 
$D_{KL}[q_{\phi}(\rvz \mid \rvx)||p_{\theta}(\rvz)]$ play a small role at the beginning.
When the encoder establishes an association between $\rvz$ and $\rvx$, 
it gradually increases the alignment requirements between the approximate posterior and the prior distribution.
~\citep{he2019lagging} changes the way VAE trains encoder and decoder at the same time,
use sequence training to update the decoder and encoder parameters.
The third method is to use the compensation mechanism in the original optimization goal of VAE.
~\citep{zhao2017infovae, dieng2018avoiding,ma2019mae} added a regular term of the mutual information 
$\mathcal{I}_{q_{\phi}}(\rvz,\rvx)$ to the loss function of VAE.
This method is highly targeted,
but the mutual information $\mathcal{I}_{q_{\phi}}(\rvz,\rvx)$ is difficult to calculate 
and generally depends on Monte Carlo sampling for estimation.
~\citep{tolstikhin2017wasserstein, zhao2017infovae} 
used the NMD-based penalty to calculate the difference between the two distributions.
~\citep{dieng2018avoiding} proposes a method based on skip model to optimize mutual information.
~\citep{ma2019mae} estimates mutual information based on the mutual KL-Divergence divergence(MPD).

In this article, 
(1) Based on the calculation process of VAE, we analyzed the causes of encode collapse and blurry problem in detail.
(2) In the framework of vanilla VAE, 
we propose a novel method to estimate the mutual information $\mathcal{I}_{q_{\phi}}(\rvz,\rvx)$ 
and the marginal KL-Divergence $D_{KL}(q_{\phi}(\rvz)\ ||\ p_{\theta}(\rvz))$ 
by the auxiliary softmax MultiClassifier, noted as VAE-AS.
(3) we test the validity of VAE-AS using MNIST and Omniglot data sets,
and evaluate the quality of hidden variable space from various aspects.

\section{Background}
\subsection{VAE}

Modeling the joint distribution of the latent variable $\rvz$ and the observed variable $\rvx$,
VAE calculates the margin distribution of $\rvx$.
Consider the sample data set $\mX=\{\vx^{(i)}\}_{i=1}^{N}$ consisting of $N$ i.i.d. samples.
Introduce the latent variable $\rvz$, the joint distribution of $\rvx$ and $\rvz$ is $p_{\theta}(\rvx,\rvz)$.
The true distribution of $\rvx$ can be obtained by computing the margin distribution 
$p_{\theta}(\rvx)=\int p_{\theta}(\rvx \mid \rvz)p_{\theta}(\rvz) \text{d}\rvz$.
However, this integral is often intractable, 
so VAE uses the approximate distribution $q_{\phi}(\rvz \mid \rvx)$ to solve the problem by variational inference.

The margin likelihood function can be written as the sum of the likelihood function for each sample point,
$\text{log}p_{\theta}(\vx^{(1)},\cdots,\vx^{(N)})=\sum_{i=1}^{N}\text{log }p_{\theta}(\vx^{(i)})$.
Combin with  Jensen Inequality, for each sample $\vx^{(i)}$,
\begin{align}
    \text{log} p_{\theta}(\vx^{(i)}) &= \text{log} \int p_{\theta}(\vx^{(i)},\rvz) \text{d}\rvz 
    =\text{log} \mathbb{E}_{q_{\phi}(\rvz\mid \vx^{(i)})}\left[
    \frac{p_{\theta}(\vx^{(i)},\rvz)}{q_{\phi}(\rvz\mid \vx^{(i)})} \right] \notag \\
& \geq \mathbb{E}_{q_{\phi}(\rvz\mid \vx^{(i)})} \left[
\text{log} p_{\theta}(\vx^{(i)},\rvz)- \text{log}q_{\phi}(\rvz\mid \vx^{(i)}) \right] 
    = \mathcal{L}_{ELBO}(\theta,\phi;\vx^{(i)})
    \label{eq-2-1-1}
\end{align}
where, joint distribution is $p_{\theta}(\rvx,\rvz) = p_{\theta}(\rvx\mid \rvz)p_{\theta}(\rvz)$, 
the evidence lower bound can be written as
\begin{align}
    \mathcal{L}_{ELBO}(\theta,\phi;\mX) & = \sum_{i=1}^{N} \left 
    [\mathbb{E}_{q_{\phi}(\rvz\mid \vx^{(i)})} \left[ \text{log} p_{\theta}(\vx^{(i)}\mid \rvz) \right] 
    - D_{KL}\left[ q_{\phi}(\rvz\mid \vx^{(i)})\ ||\ p_{\theta}(\rvz) \right ] \right ]
    \label{eq-2-1-2}
\end{align}
It can be seen from the \eqref{eq-2-1-2} that the loss function of VAE consists of two parts.
One part of the loss is the observation $\vx^{(i)}$ reconstruction error, 
and the other part is the difference between the posterior distribution $q_{\phi}(\rvz|\rvx)$ 
and the prior distribution $p_{\theta}(\rvz)$.

The prior distribution of the latent variable $\rvz$ chosen by VAE is denoted as $p_{\theta}(\rvz)$,
General assumption is a multidimensional standard normal distribution 
$\mathcal{N}(\rvz; \boldsymbol{0},\mI)$ to facilitate sampling of the model.
The complete calculation of VAE includes two processes of encoding and decoding.
During the encoding process,
the encoder training fit a conditional posterior distribution of $\rvz$ after $\rvx$ is observed,
noted as $q_{\phi}(\rvz\mid \rvx)$. 
We use KL-Divergence $D_{KL}(q_{\phi}(\rvz\mid \rvx)\ ||\  p_{\theta}(\rvz))$ 
to evaluate the similarity of a prior and conditional posterior distributions of latent variable.
That is, the posterior distribution generated by the encoder is required 
to be as close as possible to a standard normal distribution.
The decoding process is to sample $\vz \sim p_{\theta}(\rvz)$ from the prior distribution,
then use the decoder $p_{\theta}(\rvx|\vz)$ to generate a $\vx$ refactoring $\hat{\vx}$.
Compare the difference between $\vx$ and $\hat{\vx}$ as a loss function, 
calculating the gradient to update parameters $\theta, \phi$.

The calculation of $\text{log }p_{\theta}(\vx^{(i)},\rvz)$ in VAE depends on sampling.
Specifically, for each sample $\vx^{(i)}$, VAE encoder fits the variables $\mu(\vx^{(i)})$ and $\sigma(\vx^{(i)})$,
it is used to define the posterior distribution $q_{\phi}(\rvz\mid \vx^{(i)})=
\mathcal{N}(\rvz;\mu(\vx^{(i)}),\sigma(\vx^{(i)}))$ 
of the latent variable $\rvz$.
Then VAE completes $L$ samples and calculates
$$\tilde{\mathcal{L}}(\theta,\phi;\mX) \simeq \frac{1}{N}\sum_{i=1}^{N} \left [
    \frac{1}{L}\sum_{l=1}^{L} \left( \text{log} p_{\theta}(\vx^{(i)}\mid \vz^{(l)}) \right)
- D_{KL}\left[ q_{\phi}(\rvz \mid \vx^{(i)})\ ||\ p_{\theta}(\rvz) \right ] \right ]$$ 

In order to ensure the continuity of the loss calculation,
enable the inverse gradient descent training process.
VAE uses the technique of reparameterization during the sampling process.
First sample the random variable $\varepsilon^{(l)}$ from $\mathcal{N}(0,1)$,
then let $\vz^{(l)}=\mu(\vx^{(i)})+\sigma(\vx^{(i)}) \odot \varepsilon^{(l)}$.

\subsection{Disadvantages of VAE}
For VAE, the latent variable space needs to have sufficient feature coding ability to express complex real distributions.
The sample space is often discrete, 
but the encoder needs to smooth and fill the missing parts between discrete samples to generate new pictures or text.
At the same time, the coding space must be conducive to sampling and likelihood calculations, 
which requires that the prior distribution $p_{\theta}(\rvz)$ is simple.

The VAE encoder encodes each discrete sample $\vx^{(i)}$ of $\mX$ into a continuous latent random variable $\rvz$.
It can be seen that $\vx^{(i)}$ is mapped to a continuous area of distribution defined by 
$q_{\phi}(\rvz \mid \vx^{(i)})= \mathcal{N}(z;\mu(\vx^{(i)}), \sigma(\vx^{(i)}))$.
Intuitively, in the calculation of the equation \eqref{eq-2-1-2},
VAE hopes that the difference between the distribution of $\rvz$ generated by each $\vx^{(i)}$ encoding is large 
to reduce the error of reconstruction.
But minimizing the KL-Divergence term pushes $q_{\phi}(\rvz|\vx^{(i)})$ to a uniform standard normal distribution.
These two optimization directions are contradictory,
When VAE tries to reduce the reconstruction error,
it is bound to increase the difference of the code distribution $q_{\phi}(\rvz\mid \vx^{(i)})$,
then KL-Divergence will increase.
When the reduced reconstruction error is smaller than the increased KL-Divergence, 
VAE directly reduces the KL-Divergence without the error reduction.

Design an experiment,
when feeding a completely random set of images to VAE for learning,
we will find that the KL-Divergence will soon tend to $0$.
Because the image does not contain any rules, 
the VAE reconstruction error is optimized so small that it does not start at all.


There are two cases of coding collapse.
One is that in the early stage of model training,
because the parameters optimization of the decoder have not been completed 
or because the randomness of the training samples is too strong,
strong KL-Divergence constraints will result in a high optimization constraint threshold,
models are easier to optimize only KL-Divergence.
In a more general view, the text~\citep{he2019lagging} rewrites the ELBO to
$$\mathcal{L}_{ELBO}(\theta,\phi;\mX) = \sum_{i=1}^{N} \left [ 
\text{log } p_{\theta}(\vx^{(i)}) - 
D_{KL}\left[ q_{\phi}(\rvz\mid \vx^{(i)})\ ||\ p_{\theta}(\rvz\mid \vx^{(i)}) \right ]\right ] $$
The first term in R.H.S can be seen as the maximum log likelihood of the decoder's distribution.
The KL-Divergence of the second term characterizes the degree of similarity between 
$p_{\theta}(\rvz \mid \rvx)$ and $q_{\phi}(\rvz \mid \rvx)$.
When the VAE is trained in the beginning, 
the encoder is not perfect,
the correlation between $\rvz$ and $\rvx$ is not strong and can be regarded as an independent variable.
When $p_{\theta}(\rvz)$ is simple, 
minimizing the KL-Divergence requires $q_{\phi}(\rvz | \rvx)$ to be directly locked to $p_{\theta}(\rvz)$,
which is, $\rvz$ is independent with $\rvx$, $q_{\phi}(\rvz_d \mid \vx^{(i)})=q_{\phi}(\rvz_d) \sim \mathcal{N}(\rvz_d;0,1)$,
similarly $p_{\theta}(\rvz_d \mid \vx^{(i)})= p_{\theta}(\rvz_d) \sim \mathcal{N}(\rvz_d;0 , 1)$,
$D_{KL}$ item disappears on this element.

Another type of collapse is due to the lack of correlation in the model encoding, 
or the need of correlation for reconstruction is small,
it also causes the fact that the feature space dimension exceeds the active number of latent variable.
Image and language coding often have autocorrelation, 
such as the value of a pixel in an image and the value of surrounding pixels.
In the case of a strong decoder, using the autocorrelation information, 
$\rvx$ can be recovered without $\rvz$, 
As can be seen from the Table~\ref{tab1}, 
with the increase of layers of Decoder network, the decoding ability of decoder is continuously enhanced.
Encoder $q_{\phi}(\rvz \mid \rvx)$  constructs $\mu(\vx)$ according to $\vx$,
The number of dimensions with larger variances of $\mu(\vx)$ continues to decline.
This shows that the higher the complexity of the Decoder, 
then the less information is used in the information contained in $\rvz$.

\begin{table}
  \centering
  \caption{The number of active units of $\rvz$, changes with number of decoder network layers}
  \label{tab1}
  \begin{tabular}{l|c|c|c|c|c}
    \toprule[1pt]
    \hhline
    Decoder Layers      & 1 & 2 & 3 & 4 & 5 \\ \hline
    $\text{Cov}_{p(\rvx)} \left( \mathbb{E}_{q_{\phi}(\rvz \mid \rvx)}[z_{d}] \right) \geq 0.01$ & 23 &21 & 17 &12 &8 \\ 
    \bottomrule[1pt]
  \end{tabular}
\end{table}

From the above, in order to prevent the posterior collapse, 
we need to increase the correlation between $\rvz$ and $\rvx$ in the joint distribution 
to prevent independent of $\rvz$ and $\rvx$.
The correlation in the joint distribution is mainly controlled by mutual information.
The problem of blurred images is also related to mutual information. 
Mentioned in the article~\citep{zhao2017towards},
the optimal solution to reconstruction loss of decoder for a given $q_{\phi}$ 
is $\mathbb{E}_{q_{\phi}(\rvx \mid \vz)}\left[ \rvx \right]$.
That is to say, the reconstructed image of VAE is the expectation of a set of images,
which is related to the smoothness of the latent variable distribution.
The mutual information is directly related to the entropy of $q_{\phi}(\rvx|\vz)$.
We prefer to think the blurry as a trade-off rather than a problem.
In the article~\citep{shwartz2017opening, alemi2016deep} on information bottlenecks, the neuron network will have a decrease in mutual information $I_{q_{\phi}}(Z,X)$ in the later stage of optimization training.
It is actually an optimization that reduces overfitting.
when the mutual information is reduced, the smoothness of the model is higher,
and the anti-noise and fitting between samples is better.
As the mutual information increases, the error of model reconstruction will be smaller,
but the generalization performance of the model will be worse.
Therefore, for mutual information, what we need more is to better control according to the model task,
rather than simply increasing or decreasing.


\section{Method}
\subsection{Mutual Information}
According to~\citep{hoffman2016elbo, dieng2018avoiding},
the KL-Divergence term in the ELBO of the VAE can be expressed as (See the appendix for details).
$$\frac{1}{N} \sum_{i=1}^{N} D_{KL}(q_{\phi}(\rvz \mid \rvx) ||p_{\theta}(\rvz)) 
= \mathcal{I}_{q_{\phi}}(\rvz,\rvx)+D_{KL}(q_{\phi}(\rvz) || p_{\theta}(\rvz))$$
where $\mathcal{I}_{q_{\phi}}$ is the mutual information between $\rvx$ and $\rvz$, 
measure the correlation between $\rvx$ and $\rvz$. 
And $D_{KL}(q_{\phi}(\rvz) || p_{\theta}(\rvz))$ is the KL-Divergence between code marginal distribution $q_{\phi}(\rvz)$
and prior distribution $p_{\theta}(\rvz)$.

In order to control the mutual information and the marginal KL-Divergence more precisely, 
rewrite the ELBO:
\begin{align}
    \hat{\mathcal{L}}(\theta,\phi;\mX) &\simeq \frac{1}{N}\sum_{i=1}^{N} \left [
    \frac{1}{L}\sum_{l=1}^{L} \left( \text{log} p_{\theta}(\vx^{(i)}\mid \vz^{(l)}) \right) \right ]
    - \alpha \mathcal{I}_{q_{\phi}}(\rvz, \rvx) 
    - \beta D_{KL}\left[ q_{\phi}(\rvz)\ ||\ p_{\theta}(\rvz) \right ] 
    \label{eq-3-1-1}
\end{align}
Where $\alpha$ and $\beta$ are Lagrange multiplier, 
By adjusting $\alpha$, $\beta$,
we can adjust the performance of VAE-AS to meet different optimization requirements.

The calculation of mutual information and marginal KL-Divergence involves solving the margin distributions of $q_{\phi}(\rvz)$.
For research objects such as images and texts, the dimensions are often large and the probability of the samples are both small.
Select sample each time, 
the distribution of the sample $\{\vx^{(1)},\cdots,\vx^{(N)}\}$ can be considered as a discrete Categorical distribution
$Cat(c^{(1)},\cdots,c^{(N)})$, where $c^{(1)}=\cdots=c^{(N)}=\frac{1}{N}$.
We note this category distribution as $q_{\phi}(\rvx)$.
Since $q_{\phi}(\rvx=\vx^{(i)})=\frac{1}{N}$, the empirical distribution can be derived as
$q_{\phi}(\rvz) = \sum_{i=1}^{N} q_{\phi}(\rvz \mid \vx^{(i)}) q_{\phi}(\rvx=\vx^{(i)})
=\frac{1}{N} \sum_{i=1}^{N} q_{\phi}(\rvz \mid \vx^{(i)})$.
$q_{\phi}(\rvz)$ is called aggregated posterior distribution~\citep{makhzani2015adversarial},
which is difficult to obtain analytical solutions.
Sampling to calculate the estimate of mutual information and marginal KL-Divergence
is generally~\citep{hoffman2016elbo, dieng2018avoiding}.

We explore mutual information $\mathcal{I}_{q_{\phi}}$ from the perspective of empirical distribution.
According to the definition of mutual information,
the mutual information $\mathcal{I}_{q_{\phi}}$ can be rewritten as:
\begin{align}
    \mathcal{I}_{q_{\phi}}(\rvz,\rvx)= \mathbb{E}_{q_{\phi}(\rvx)}\left[ \int q_{\phi}(\rvz \mid \rvx) 
    \text{log } \frac{q_{\phi}(\rvz \mid \rvx)}{q_{\phi}(\rvz)} \text{d}\rvz \right] 
    \label{eq-3-1-2}
\end{align}

For the integral term in the form \eqref{eq-3-1-2},
since $q_{\phi}(\rvz) = \frac{1}{N} \sum_{i=1}^{N} q_{ \phi}(\rvz \mid \vx^{(i)})$.
we can select $M$ sample to approximate in each training batch,
$q_{\phi}(\rvz) \simeq \frac{1}{M} \sum_{i=1}^{M} q_{\phi}(\rvz \mid \vx^{(i)})$.
But unfortunately, this method of estimation is biased,
we will use another method to calculate mutual information.

\subsection{MINE}
\begin{theorem}[Donsker-Varadhan representation]
\label{theo-1}
 The KL divergence between any two distributions $\mathbb{P}$ and 
$\mathbb{Q}$, 
with $\mathbb{P} \ll \mathbb{Q}$, admits the following dual representation~\citep{donsker1983asymptotic}
\begin{equation}
    D_{KL}(\mathbb{P} || \mathbb{Q}) = \sup_{T:\Omega \rightarrow \mathbb{R}} \mathbb{E}_{\mathbb{P}}[T] - 
    \text{log }(\mathbb{E}_{\mathbb{Q}}[\text{exp }(T)])
\end{equation}
where the supremum is taken over all functions $T$ such that the two expectations are finite.
\end{theorem}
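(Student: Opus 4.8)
The plan is to prove the two matching inequalities through an exact variational identity rather than attacking the supremum directly. For any admissible test function $T$ for which $Z_T := \mathbb{E}_{\mathbb{Q}}[\exp(T)]$ is finite and positive, I would introduce the exponentially tilted (Gibbs) measure $\mathbb{G}_T$ defined by $d\mathbb{G}_T = Z_T^{-1}\exp(T)\, d\mathbb{Q}$. Exponential tilting is the natural device here because it converts the scalar quantity $\mathbb{E}_{\mathbb{P}}[T]-\log Z_T$ into a statement about log-density ratios, which is precisely the language of KL divergence and lets the whole argument reduce to the non-negativity of relative entropy.

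The central step is to establish the identity
\[
\mathbb{E}_{\mathbb{P}}[T]-\log \mathbb{E}_{\mathbb{Q}}[\exp(T)] = D_{KL}(\mathbb{P}\,||\,\mathbb{Q}) - D_{KL}(\mathbb{P}\,||\,\mathbb{G}_T).
\]
To obtain it I would write $T = \log\frac{d\mathbb{G}_T}{d\mathbb{Q}} + \log Z_T$, take the expectation under $\mathbb{P}$, and apply the Radon--Nikodym chain rule $\frac{d\mathbb{G}_T}{d\mathbb{Q}} = \frac{d\mathbb{G}_T}{d\mathbb{P}}\cdot\frac{d\mathbb{P}}{d\mathbb{Q}}$, which is licensed by the hypothesis $\mathbb{P}\ll\mathbb{Q}$ so that all the relevant densities exist; splitting the resulting logarithm produces exactly the two KL terms. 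With this identity in hand the upper bound is immediate, since $D_{KL}(\mathbb{P}\,||\,\mathbb{G}_T)\ge 0$ by the non-negativity of KL divergence (Gibbs' inequality): every admissible $T$ therefore satisfies $\mathbb{E}_{\mathbb{P}}[T]-\log\mathbb{E}_{\mathbb{Q}}[\exp(T)]\le D_{KL}(\mathbb{P}\,||\,\mathbb{Q})$, and hence the supremum is at most $D_{KL}(\mathbb{P}\,||\,\mathbb{Q})$.

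For tightness I would exhibit an explicit maximizer, namely the log-likelihood ratio $T^{\star} = \log\frac{d\mathbb{P}}{d\mathbb{Q}}$. This choice gives $Z_{T^{\star}} = \mathbb{E}_{\mathbb{Q}}\big[\tfrac{d\mathbb{P}}{d\mathbb{Q}}\big]=1$ and $\mathbb{G}_{T^{\star}}=\mathbb{P}$, so the second divergence in the identity vanishes and the functional equals $D_{KL}(\mathbb{P}\,||\,\mathbb{Q})$ exactly, closing the gap. The main obstacle is analytic bookkeeping rather than algebra: I would need to treat the degenerate case $D_{KL}(\mathbb{P}\,||\,\mathbb{Q})=\infty$ separately (showing the supremum also diverges), verify finiteness of the expectations so that $\log Z_T$ is well defined, and deal with the fact that the optimizer $T^{\star}$ may be unbounded. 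If the supremum is taken over bounded functions one cannot plug in $T^{\star}$ directly; instead I would approximate it by truncations $T^{\star}\wedge n$ and pass to the limit using monotone/dominated convergence. These integrability and approximation details are where the genuine care is required.
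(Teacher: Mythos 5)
Your proposal is correct, but note that the paper itself offers no proof of Theorem~\ref{theo-1}: it states the Donsker--Varadhan representation as a known result and cites \citet{donsker1983asymptotic} (the statement is lifted essentially verbatim from the MINE paper). So there is no in-paper argument to compare against; what you have written is the standard textbook proof of the dual representation, and it is sound. The Gibbs-tilting identity
\[
\mathbb{E}_{\mathbb{P}}[T]-\log \mathbb{E}_{\mathbb{Q}}[\exp(T)] = D_{KL}(\mathbb{P}\,||\,\mathbb{Q}) - D_{KL}(\mathbb{P}\,||\,\mathbb{G}_T)
\]
is the right central lemma, the upper bound follows from Gibbs' inequality, and $T^{\star}=\log\frac{d\mathbb{P}}{d\mathbb{Q}}$ attains equality. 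Two small points deserve explicit mention in a write-up: first, $\mathbb{G}_T$ is equivalent to $\mathbb{Q}$ because $\exp(T)>0$ everywhere, so $\mathbb{P}\ll\mathbb{Q}$ automatically gives $\mathbb{P}\ll\mathbb{G}_T$ and $D_{KL}(\mathbb{P}\,||\,\mathbb{G}_T)$ is well defined; second, the identity as a difference of divergences is only meaningful when $D_{KL}(\mathbb{P}\,||\,\mathbb{Q})<\infty$, so the infinite case must be handled by the truncation argument you sketch (showing $\mathbb{E}_{\mathbb{P}}[T^{\star}\wedge n]-\log\mathbb{E}_{\mathbb{Q}}[\exp(T^{\star}\wedge n)]\to\infty$) rather than by the identity itself. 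You have flagged both of these correctly, so the remaining work is exactly the bookkeeping you describe and nothing is missing at the level of ideas.
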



From Theorem~\ref{theo-1}, MINE~\citep{belghazi2018mutual} has been proposed 
as a method for estimating mutual information using a neural network.
If $p(\vx),q(\vx)$ are the density functions of $\mathbb{P}, \mathbb{Q}$ respectively,
given any subclass $\mathcal{F}$ of such functions, then have
\begin{equation}
        \mathcal{I}(\rvx, \rvz) \geq \sup_{T \in \mathcal{F}} \big \{\mathbb{E}_{p(\rvx,\rvz)}[T(\rvx, \rvz)] 
        - \text{log }\big(\mathbb{E}_{p(\rvx)p(\rvz)}\big[\text{exp }\big(T(\rvx, \rvz)\big)\big]\big)\big\}
\end{equation}
where $T \in \mathcal{F}$ can be fitted with a neuron network $T_{\psi}$ parameterized $\psi \in \Psi$.
The expectations in the above lower-bound can then be estimated by Monte-carlo sampling.
After sampling, we can get
\begin{equation}
        \hat{\psi} = \arg\sup_{\psi \in \Psi} \big \{ \mathbb{E}_{q(\rvx,\rvz)}[T_{\psi}(\rvx, \rvz)] 
        - \text{log }\big(\mathbb{E}_{q(\rvx)q(\rvz)}\big[\text{exp }\big(T_{\psi}(\rvx, \rvz)\big)\big]\big)\big \}
\end{equation}
then we obtain the Mutual Information Neural Estimator (MINE):
\begin{equation}
        \hat{\mathcal{I}}(\rvx, \rvz) = \mathbb{E}_{q(\rvx,\rvz)}[T_{\hat{\psi}}(\rvx, \rvz)] 
        - \text{log }\big(\mathbb{E}_{q(\rvx)q(\rvz)}\big[
        \text{exp }\big(T_{\hat{\psi}}(\rvx, \rvz)\big)\big]\big)
    \label{eq-MINE}
\end{equation}

\subsection{Auxiliary SOFTMAX multiclassifier}
In this paper,
we add an auxiliary softmax multi-classifier based on VAE to solve the mutual information $\mathcal{I}_{q_{\phi}}$,
which is called VAE-AS.
For mutual information, we can write
\begin{align}
    \mathcal{I}_{q_{\phi}}(\rvz,\rvx) = \text{log }N - \mathbb{E}_{q_{\phi}(\rvz)}\left[ 
    \mathcal{H}\left( q_{\phi}(\rvx \mid \rvz) \right) \right]
    \label{eq-3-2-0}
\end{align}
In order to calculate the mutual information,
we need to solve $q_{\phi}(\rvx \mid \rvz)$ and the entropy of $q_{\phi}(\rvx \mid \rvz)$.
According to Bayes' theorem,
$q_{\phi}(\rvx\mid \rvz) = \frac{q_{\phi}(\rvz\mid \rvx)q_{\phi}(\rvx)}{q_{\phi}(\rvz)}$, 
and $q_{\phi}(\rvx=\vx^{(i)}) =\frac{1}{N}$, $q_{\phi}(\rvz)=\frac{1}{N}\sum_{i=1}^{N} q_{\phi}(\rvz\mid \vx^{(i)})$, 
we know,
\begin{align}
    q_{\phi}(\rvx=\vx^{(i)} \mid \rvz) = \frac{q_{\phi}(\rvz \mid \vx^{(i)})}{\sum_{j=1}^{N}q_{\phi}(\rvz \mid \vx^{(j)})}
    \label{eq-3-2-1}
\end{align}

As can be seen from the discussion in the previous section,
it is still difficult to directly calculate \eqref{eq-3-2-1}.
We use the auxiliary softmax multi-classifier to fit $q_{\phi}(\rvx\mid \rvz)$.
First number each sample, let $\{\vx^{(1)},\ldots,\vx^{(N)}\} \mapsto \{1, \cdots,N\}$. 
Equivalent to determining a label $\ve^{(i)}$  for each sample $\vx^{(i)}$,
Where $\ve^{(i)}$ is the one-hot variable $[0,\dots,0,1,0,\dots,0]$ with a 1 at position $i$.
i.e.
$$\rve_{j}^{(i)}=\left \{\begin{aligned}
        &1\quad if\ j=i \\
        &0\quad if\ j\neq i
    \end{aligned}\right .$$

\begin{figure}[h]
\begin{center}
\includegraphics[width=0.7\linewidth]{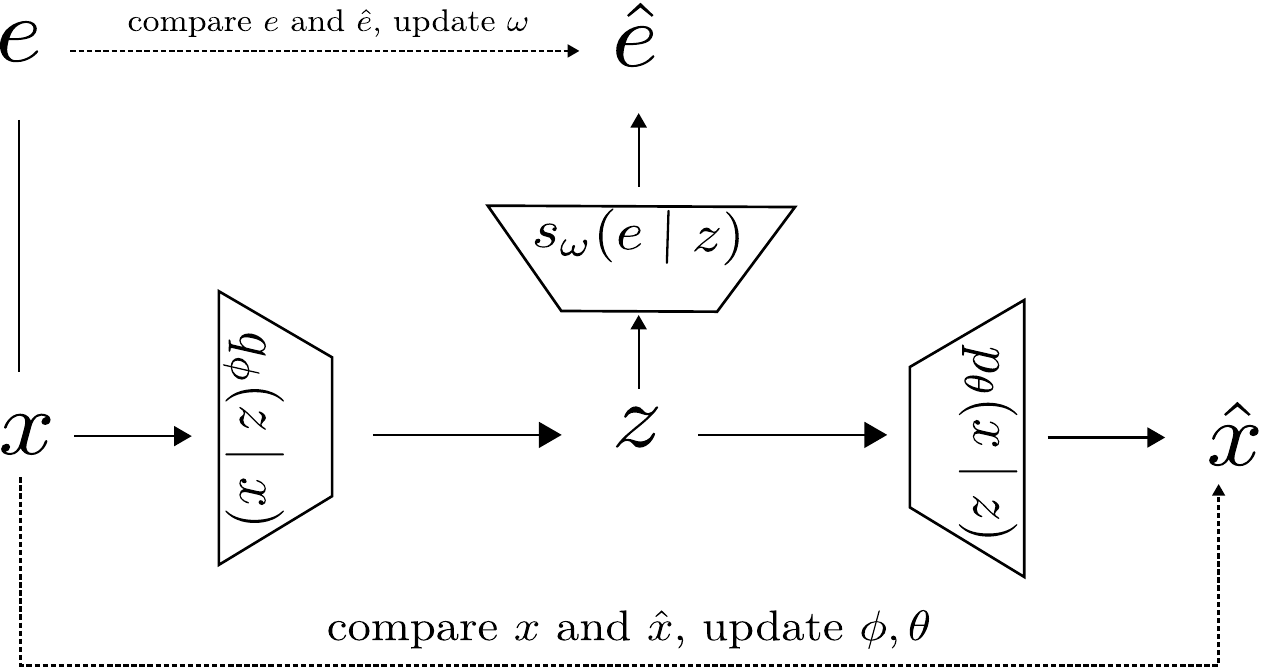}
\end{center}
\caption{VAE-AS Network Structure }
\label{fig1}
\end{figure}

Based on the structure of VAE,
we add a multi layer neural network $s_{\omega}(\hat{\rve} \mid \rvz)$ to fit the distribution $q_{\phi}(\rvx \mid \rvz)$ .
The network $s_{\omega}(\hat{\rve} \mid \rvz)$ is trained in a supervised manner,
we use the one-hot variable $\ve$ as label,
the activation function of the network 
$s_{\omega}(\hat{\rve} \mid \rvz)$ is the softmax function.

For each sampled $\vz^{(l)}$, after a multi-layered neuron network transformation,
let $\vh^{(l)}_{0} = \vz^{(l)}$, $\vh^{(l)}_{t} = g(\omega_{t-1}^T \vh^{(l)}_{t-1})$,
where $g(\cdot)$ is activation function, such as $tanh$ or $ReLU$.
The output $\vh^{(l)}_o = \omega_t^T \vh^{(l)}_{t}$,
where $\vh^{(l)}_{o}$ is $N$ dimensions vector, The element $i$ of $\vh^{(l)}_{o}$ is $\vh^{(l)}_{o,i}$.
The auxiliary softmax multi-classifier can be written as:
\begin{align}
    s_{\omega}(\hat{\rve}_i=1 \mid \vz^{(l)}) = \frac{\text{exp }(\vh^{(l)}_{o,i})}{\sum_{j=1}^{N} \text{exp }(\vh^{(l)}_{o,j})}
\end{align}
Since the serial number can be seen as a random tag $\ve^{(i)}$ allocated for each sample $\vx^{(i)}$,
The loss function is the cross entropy of the softmax function, i.e.
\begin{align}
    loss_s = -\frac{1}{L}\sum_{l=1}^{L} \sum_{j=1}^{N} \ve_{j}^{(i)} \text{log } s_{\omega}(\hat{\rve}_j =1 \mid \vz^{(l)})
    \label{eq-3-2-3}
\end{align}
Minimizing the cross entropy loss function, 
We hope $s_{\omega}(\hat{\rve} \mid \rvz)$ will gradually converge to $q_{\phi}(\rvx \mid \rvz)$.
After completing the fitting of the distribution $q_{\phi}(\rvx \mid \rvz)$, 
we can calculate the entropy $\mathcal{H}\left( q_{\phi}(\rvx \mid \rvz) \right)$ .
\begin{align}
\mathcal{H}\left( q_{\phi}(\rvx \mid \rvz) \right) 
= -\sum_{i=1}^{N} q_{\phi}(\rvx=\vx^{(i)}\mid \rvz) \text{log }q_{\phi}(\rvx=\vx^{(i)} \mid \rvz)
    \label{eq-3-2-2}
\end{align}
then we can calculate the mutual information $\mathcal{I}_{q_{\phi}}(\rvz,\rvx)$ with \eqref{eq-3-2-0}.

For Calculating $D_{KL}\left( q_{\phi}(\rvz)\ ||\ p_{\theta}(\rvz) \right)$,
we need calculate $q_{\phi}(\rvz)$, and then estimate with Monte Carlo Sampling.
After using the auxiliary softmax to estimate
$q_{\phi}(\rvx = \vx^{(i)} \mid \rvz)$, 
we can estimate $q_{\phi}(\rvz)$ by Bayesian formula.
\begin{align}
    q_{\phi}(\rvz) = \frac{q_{\phi}(\rvz \mid \vx^{(i)})}{Nq_{\phi}(\rvx = \vx^{(i)} \mid \rvz)}
    \label{eq-3-2-4}
\end{align}
After estimating the mutual information 
and the marginal divergence 
we can calculate ELBO as \eqref{eq-3-1-1}.

\subsection{VAE-AS and MINE}
\begin{theorem}
\label{theo-2}
VAE-AS and MINE are equivalent in the optimization goal and estimation of mutual information.
\end{theorem}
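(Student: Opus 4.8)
The plan is to exhibit an explicit parameter correspondence between the auxiliary classifier $s_{\omega}$ and a MINE statistics network $T_{\psi}$, to rewrite the softmax cross-entropy \eqref{eq-3-2-3} as a Donsker--Varadhan-type lower bound on $\mathcal{I}_{q_{\phi}}(\rvz,\rvx)$ in the spirit of \eqref{eq-MINE}, and then to show that the two bounds are maximized by the same log-density-ratio statistic and are both tight there. First I would set $T(\vx^{(i)},\vz^{(l)}) := \vh^{(l)}_{o,i}$, so that the $i$-th output logit of the classifier is exactly a MINE statistic evaluated on the discrete event $\{\rvx=\vx^{(i)}\}$; under this identification $s_{\omega}(\hat{\rve}_i=1\mid\vz^{(l)}) = \exp(T(\vx^{(i)},\vz^{(l)}))/\sum_{j=1}^{N}\exp(T(\vx^{(j)},\vz^{(l)}))$, so $s_{\omega}$ and $T_{\psi}$ carry the same parameters and live in the same function class $\mathcal{F}$.

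Next I would pass to the population limit of the loss. Using $q_{\phi}(\rvx=\vx^{(i)})=1/N$ together with the key identity $\sum_{j=1}^{N}\exp(T(\vx^{(j)},\rvz)) = N\,\mathbb{E}_{q_{\phi}(\rvx)}[\exp(T(\rvx,\rvz))]$, the negative population cross-entropy (the $L\to\infty$ limit of $-loss_s$) becomes
\[ \mathbb{E}_{q_{\phi}(\rvx,\rvz)}\big[\log s_{\omega}(\hat{\rve}\mid\rvz)\big] = \mathbb{E}_{q_{\phi}(\rvx,\rvz)}[T(\rvx,\rvz)] - \mathbb{E}_{q_{\phi}(\rvz)}\big[\log \mathbb{E}_{q_{\phi}(\rvx)}[\exp(T(\rvx,\rvz))]\big] - \log N, \]
which is, up to the additive constant $\log N$, a Donsker--Varadhan lower bound on $\mathcal{I}_{q_{\phi}}$ of the same form as \eqref{eq-MINE}. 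Hence minimizing $loss_s$ and maximizing the MINE objective are the same variational program, namely $\sup_{T\in\mathcal{F}}$ of a DV functional, which settles the equivalence of the optimization goal.

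For the mutual-information estimate I would match the two optimizers. Minimizing cross-entropy drives $s_{\omega}$ to the Bayes-optimal $q_{\phi}(\rvx\mid\rvz)$ of \eqref{eq-3-2-1}, forcing $T^{\star}(\vx^{(i)},\rvz)=\log q_{\phi}(\rvz\mid\vx^{(i)})+c(\rvz)$ for some $\rvz$-only term $c$, whereas the MINE bound is maximized at $T^{\star}_{\mathrm{MINE}}(\vx^{(i)},\rvz)=\log q_{\phi}(\rvz\mid\vx^{(i)})-\log q_{\phi}(\rvz)$. The two optimizers differ only by a $\rvz$-only additive term, which the softmax normalization annihilates and which leaves the likelihood ratio unchanged; substituting $s_{\omega}=q_{\phi}(\rvx\mid\rvz)$ into \eqref{eq-3-2-2} and \eqref{eq-3-2-0} returns exactly $\mathcal{I}_{q_{\phi}}$, the same value MINE attains at $T^{\star}_{\mathrm{MINE}}$. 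Both estimators are therefore tight at convergence and report the identical mutual information.

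The main obstacle I anticipate is that the two functionals are \emph{not} termwise equal for a generic, non-optimal $T$: the classifier bound carries the logarithm inside the $q_{\phi}(\rvz)$-expectation, whereas \eqref{eq-MINE} places it outside, and by Jensen's inequality these disagree off the optimum. The equivalence must accordingly be argued at the shared maximizer rather than as an identity of objectives. I would make this precise by verifying that the DV functional and the classifier functional share the same stationary point, the log-density-ratio statistic, and that both bounds are tight there: the finite uniform support of $\rvx$ on $\{\vx^{(1)},\dots,\vx^{(N)}\}$ makes the $N$-class softmax bound saturate, so substituting the optimizer into \eqref{eq-3-2-0} recovers $\mathcal{I}_{q_{\phi}}$ with no residual gap.
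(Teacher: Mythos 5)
Your proof follows essentially the same route as the paper's: you identify the classifier logits $\vh^{(l)}_{o,i}$ with a MINE statistics network via $T_{\psi}(\vx^{(i)},\vz^{(l)})=\vh^{(l)}_{o,i}$, use $q_{\phi}(\rvx=\vx^{(i)})=\frac{1}{N}$ to decompose the softmax cross-entropy into $\mathbb{E}_{q(\rvx,\rvz)}[T]$ minus a log-normalizer term plus the constant $\text{log }N$, and then argue that at the optimum $s_{\omega}$ coincides with the Bayes posterior $q_{\phi}(\rvx\mid\rvz)$ so that both estimators return $\mathcal{I}_{q_{\phi}}(\rvz,\rvx)$. On the one delicate point you are in fact more careful than the paper: the paper asserts without justification that the normalizer term converges to $\text{log }(\mathbb{E}_{q(\rvx)q(\rvz)}[\text{exp }(T)])$ with the logarithm outside the $\rvz$-expectation, whereas you correctly observe that the classifier objective keeps the logarithm inside that expectation, that the two functionals therefore differ by a Jensen gap for non-optimal $T$, and that the equivalence should be claimed only at the shared maximizer where both bounds are tight.
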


\begin{proof}
    Step 1: 
    prove that VAE's optimization goal is equivalent to MINE's optimization goal.
    The output variable $\vh$ of classifier $s_{\psi}$ is $N$ dimensions vector.
    its $i$ component corresponds to the observation vector $\vx^{(i)}$,
    $h^{(l)}$ can be thought as a function of $\vx^{(i)}$ and $\vz^{(l)}$, 
    $\vh^{(l)}_i = T_{\psi}(\vx^{(i)}, \vz^{(l)})$,
    then from \eqref{eq-3-2-3} the optimization goal can be written:
    \begin{equation*}
        \max \frac{1}{NL} \sum_{i=1}^{N} \sum_{l=1}^{L} \text{log }
        \frac{\text{exp }(T_{\psi}(\vx^{(i)}, \vz^{(l)}))}{\sum_{j=1}^{N} \text{exp }(T_{\psi}(\vx^{(j)}, \vz^{(l)}))}
    \end{equation*}
    Decomposing the $\text{log}$ function and 
    adding the constant $\text{log }\frac{1}{N}$ does not affect optimization,
    the optimization goal is equivalent to
    \begin{equation*}
            \max \frac{1}{NL}\sum_{i=1}^{N} \sum_{l=1}^{L} T_{\psi}(\vx^{(i)}, \vz^{(l)}) 
            - \frac{1}{NL}\sum_{i=1}^{N} \sum_{l=1}^{L} \text{log } 
            \Big(\frac{1}{N}\sum_{j=1}^{N}\text{exp }\big(T_{\psi}(\vx^{(j)}, \vz^{(l)})\big)\Big)
        \end{equation*}
    Since sample $\vz^{(l)}$ from $q(\rvz \mid \vx^{(i)})$,
    the first term will converge to $\mathbb{E}_{q(\rvx, \rvz)}(T_{\psi}(\rvx, \rvz))$.
    For the second term, about $z^{(l)}$, 
    we need to calculate $T_{\psi}$ for each $x^{(j)}$, 
    so $z^{(l)}$ is independent of $x^{(j)}$, the second term will converge to
    $\text{log }(\mathbb{E}_{q(\rvx)q(\rvz)}\left[\text{exp }(T_{\psi}(\rvx, \rvz))\right])$.

    Step 2:
    When $\psi = \hat{\psi}$, $s_{\hat{\psi}}(\rvz)$ can be seen as fit of $q(\rvx \mid \rvz)$.
    From \eqref{eq-3-2-0}, 
    \begin{equation*}
            \hat{\mathcal{I}}_{q}(\rvx,\rvz) = \mathbb{E}_{q(\rvz)}\Big[ 
            \sum_{i=1}^{N}\hat{q}(\vx^{(i)}\mid \rvz)T_{\hat{\psi}}(\vx^{(i)},\rvz) \\
            - \sum_{i=1}^{N} \hat{q}(\vx^{(i)}\mid \rvz)
                \text{log }\Big(\frac{1}{N}\sum_{j=1}^{N}\text{exp }
            \big(T_{\hat{\psi}}(\vx^{(j),\rvz})\big)\Big)\Big]
    \end{equation*}
    where 
    the first item will converge to
    $\mathbb{E}_{q(\rvx, \rvz)}T_{\hat{\psi}}(\rvx, \rvz)$,
    the rest is converged to $\text{log }(\mathbb{E}_{q(\rvx) q(\rvz)}
    \left [\text{exp }(T_{\hat{\psi}}(\rvx, \rvz)) \right] )$.

\end{proof}

\subsection{VAE-AS optimization}
Calculating mutual information by the formula \eqref{eq-3-2-2} requires traversal of all samples,
which is computationally burdensome.
The following proposition will give an approximate estimate.

\begin{theorem}
\label{prop-1}
For each $\vx^{(i)}$, 
Note the probability of multi-classifier prediction error
$\mathbb{P}(\hat{\rve}_i \neq \rve_i \mid \rvz)=s_{\omega}(\hat{\ve}^{(i)} \neq \ve^{(i)} \mid \rvz)$
as $\mathbb{P}_e^{(i)}$, let $\mathbb{P}_e = \frac{1}{N}\sum_{i=1}^{N} \mathbb{P}_{e}^{(i)}$.
For VAE-AS, let
$$\hat{\mathcal{I}}(\rvz,\rvx)=\text{log }N + \mathbb{E}_{q_{\phi}(\rvz)}\left[ 
    \mathbb{P}_e^{(i)}
    \text{log } \mathbb{P}_e^{(i)}
    + (1-\mathbb{P}_e^{(i)})
    \text{log } (\mathbb{P}_e^{(i)})
\right ]$$
be an estimate of $\mathcal{I}_{q_{\phi}}(\rvz,\rvx)$.
There is 
$$\hat{\mathcal{I}}(\rvz,\rvx) - \mathcal{I}_{q_{\phi}}(\rvz,\rvx) \leq \mathbb{P}_e \text{log }N$$.
\end{theorem}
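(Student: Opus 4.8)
The plan is to recognize the claimed inequality as a repackaging of Fano's inequality. First I would rewrite the gap in a form that makes the comparison transparent. Using the identity \eqref{eq-3-2-0}, namely $\mathcal{I}_{q_{\phi}}(\rvz,\rvx) = \log N - \mathbb{E}_{q_{\phi}(\rvz)}[\mathcal{H}(q_{\phi}(\rvx \mid \rvz))]$, and reading the bracketed term in the definition of $\hat{\mathcal{I}}$ as the negative binary entropy $-H_b(\mathbb{P}_e^{(i)})$ with $H_b(p) = -p\log p - (1-p)\log(1-p)$, the difference telescopes to
$$\hat{\mathcal{I}}(\rvz,\rvx) - \mathcal{I}_{q_{\phi}}(\rvz,\rvx) = \mathbb{E}_{q_{\phi}(\rvz)}\big[\mathcal{H}(q_{\phi}(\rvx \mid \rvz)) - H_b(\mathbb{P}_e^{(i)})\big].$$
Hence it suffices to bound the integrand pointwise in $\rvz$ and then take expectations.

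Next I would invoke Fano's inequality on the $N$-way classification defined by the auxiliary softmax, for a \emph{fixed} $\rvz$. Here the conditional law of the true label is exactly $q_{\phi}(\rvx \mid \rvz)$, the natural guess is the true index $i$, and the error probability under this law is $\mathbb{P}_e^{(i)} = 1 - q_{\phi}(\vx^{(i)}\mid\rvz)$. Introducing the error indicator $E = \mathbf{1}[\hat{\rve}\neq\rve]$ and applying the chain rule $H(\rve, E \mid \rvz) = H(E \mid \rvz) + H(\rve \mid E, \rvz)$, I would bound $H(E\mid\rvz)\le H_b(\mathbb{P}_e^{(i)})$, the zero-error branch of $H(\rve\mid E,\rvz)$ by $0$, and the error branch by $\log(N-1)$, which yields the standard estimate
$$\mathcal{H}(q_{\phi}(\rvx \mid \rvz)) \leq H_b(\mathbb{P}_e^{(i)}) + \mathbb{P}_e^{(i)}\log(N-1).$$
Rearranging gives the pointwise bound $\mathcal{H}(q_{\phi}(\rvx\mid\rvz)) - H_b(\mathbb{P}_e^{(i)}) \leq \mathbb{P}_e^{(i)}\log(N-1) \leq \mathbb{P}_e^{(i)}\log N$, where the final slack between $\log(N-1)$ and $\log N$ is exactly what makes the stated inequality go through.

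Finally I would take $\mathbb{E}_{q_{\phi}(\rvz)}$ of both sides. The left-hand side is precisely the gap computed in the first step, and the right-hand side becomes $\mathbb{E}_{q_{\phi}(\rvz)}[\mathbb{P}_e^{(i)}]\log N$, which I would identify with $\mathbb{P}_e \log N$ for the averaged error $\mathbb{P}_e = \frac{1}{N}\sum_{i=1}^{N} \mathbb{P}_e^{(i)}$, completing the argument.

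The step I expect to be the main obstacle is not Fano itself but the bookkeeping that ties the per-sample error $\mathbb{P}_e^{(i)}$ to the error probability conditioned on a given $\rvz$: the statement writes $\mathbb{P}_e^{(i)}$ inside an expectation over $\rvz$ while the index $i$ is implicitly the one that generated $\rvz$, so I would have to make precise that $\mathbb{E}_{q_{\phi}(\rvz)}[\mathbb{P}_e^{(i)}]$ coincides with the sample-averaged $\mathbb{P}_e$ under the Categorical model $q_{\phi}(\rvx=\vx^{(i)})=\tfrac{1}{N}$. I would also flag that the bracketed term in the statement appears to contain a typo, its second summand should read $(1-\mathbb{P}_e^{(i)})\log(1-\mathbb{P}_e^{(i)})$ so as to equal $-H_b(\mathbb{P}_e^{(i)})$; under that reading the proof above is the intended one.
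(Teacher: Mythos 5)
Your proposal is correct and follows essentially the same route as the paper: both reduce the gap $\hat{\mathcal{I}}(\rvz,\rvx) - \mathcal{I}_{q_{\phi}}(\rvz,\rvx)$ to a pointwise comparison of $\mathcal{H}(q_{\phi}(\rvx \mid \rvz))$ with the binary entropy of the error probability via Fano's inequality applied to the Markov chain $\rvx \rightarrow \rvz \rightarrow \hat{\rve}$, and then conclude with the identity \eqref{eq-3-2-0}. You supply more detail than the paper (deriving Fano from the chain rule on the error indicator, and correctly flagging that the second summand in the statement should read $(1-\mathbb{P}_e^{(i)})\log(1-\mathbb{P}_e^{(i)})$), but the key lemma and decomposition are identical.
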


\begin{proof}

    According to the network structure, $\rvx \rightarrow \rvz \rightarrow \hat{\rve}$ constitutes a Markov chain.
    From Fano's Inequality we know, for each $\vx^{(i)}$,
    $$\mathcal{H}(\mathbb{P}_e^{(i)})+\mathbb{P}_e^{(i)} \text{log }N 
    \geq \mathcal{H}(q_{\phi}(\rvx=\vx^{(i)} \mid \rvz))$$
    $$\text{log }N - \mathcal{H}(\mathbb{P}_e^{(i)}) - \mathbb{P}_e^{(i)} \text{log }N \leq 
    \text{log }N - \mathcal{H}(q_{\phi}(\rvx=\vx^{(i)} \mid \rvz))$$
    where, 
    $$\mathcal{H}(\mathbb{P}_e^{(i)})=
    - s_{\omega}(\hat{\rve}^{(i)} = \rve^{(i)} \mid \rvz) \text{log } s_{\omega}(\hat{\rve}^{(i)}=\rve^{(i)} \mid \rvz)
    - s_{\omega}(\hat{\rve}^{(i)} \neq \rve^{(i)} \mid \rvz) \text{log } s_{\omega}(\hat{\rve}^{(i)} \neq \rve^{(i)} \mid \rvz)
    $$.
    From \eqref{eq-3-2-0} we can get,
    $$\hat{\mathcal{I}}(\rvz,\rvx) - \mathcal{I}_{q_{\phi}}(\rvz,\rvx) \leq \mathbb{P}_e \text{log }N$$

\end{proof}

Intuitively, Theorem {prop-1} simplifies the entropy calculation in a Multinomial distribution
by using the part of the error probability as a whole.
Therefore, a bias of $\mathbb{P}_e \text{log }N$ will be generated.
With optimization, the predicted error rate $\mathbb{P}_e$ will gradually decrease,
The gap between $\hat{\mathcal{I}}(\rvz,\rvx)$ and $\mathcal{I}_{q_{\phi}}(\rvz,\rvx)$ will be gradually reduced to $0$.

When the sample size is large, the cost of softmax multi-classification calculation is large.
$q_{\phi}(\rvz)$ needs to calculate the sum of all categories probability,
i.e. the normalized part of the denominator of the softmax function.
To reduce the computational load of the softmax function,
similar to the Huffman tree method used in~\citep{morin2005hierarchical},
we uses bianry tree.

For each $\vx^{(i)}$ we define tags $\ve^{(i)}$,
$\ve$ is one-hot random vector, 
w.r.t. sampling $\vz^{(l)}$, the output vector of classifier network $s_{\omega}$ is $\vh^{(l)}$. 
Construct a $V$ layers binary tree,
each leaf node, representing a component of the one-hot vector,
every two child node has one parent node, finally shrink to the root node.
Each layer before the leaf node is noted as $\vt^{(1)}, \dots,\vt^{(V)}$,
let $\vt^{(l,v)}=\omega^{(v)}\vh^{(l)}$, where $\omega^{(v)}$ is parameter matrix.
The vector length is increased layer by layer, with up to $2^v$ nodes per layer.
It can be seen that for $N$ samples, $V >= \text{log}_{2}\frac{N}{2}$.
Except for leaf nodes, each node component $t^{(v)}_{j}$ 
has one left and one right child nodes (edge nodes maybe has one child).
The path from the parent node to the child node is denoted as $d_{j}^{(v)}$, 
and when $d_{j}^{(v)}=1$, the node $t_{j}^{(v)}$ select the path to left child,
select right child when $d_{j}^{(v)}=0$.

using the sigmoid activation function $\sigma(\cdot)$ to calculate the probability of two paths.
$$\mathbb{P}(d^{(v)}_{j}=1 \mid \vz^{(l)})=\sigma(t^{(l,v)}_j)$$
correspondingly we know $\mathbb{P}(d_{j}^{(v)}=0 \mid \vz^{(l)})=1-\sigma(t^{(l,v)}_j)$.
For an one-hot vector $\ve^{(i)}$,
there is a unique path on the binary tree pointing to its corresponding leaf node.
The node through which the path passes is $t_{j_{1}}^{(l,1)},\dots,t_{j_{V}}^{(l,V)}$,
The path of the selected child node for each node $t_{j_{v}}^{(l,v)}$ is $d_{j_{v}}^{(l,v)}$.
Then the fit probability of the classifier is
$$s_{\omega}(\hat{\ve}=\ve^{(i)} \mid \vz^{(l)})=\prod_{v=1}^{V}\mathbb{P}(d_{j_v}^{(l,v)} \mid \vz^{(l)})$$
where $d_{j_v}^{(l,v)}$ is $0$ or $1$.
Compare $\hat{\ve}$ and $\ve$ to construct the loss function,
the parameters $\omega$ in the binary tree are optimized to fit the softmax function.


Using the softmax function approximation, the computational complexity is $O(hN)$, 
where $h$ is the classifier output feature number.
Based on multidimensional binary labels, the computational complexity is optimized to $O(h\text{ log}_2N)$.

In the structure of VAE, 
when the distributions mapped by one sample $\vx^{(i)}$ 
and another sample $\vx^{(j)}$ are far away in the latent variable space,
i.e. $\vx^{(i)}$ and $\vx^{(j)}$ are quite different, 
Even if the identity of the two is same, 
it will not affect the system's fit of $q_{\phi}(\rvx \mid \rvz)$.
In other words,
the number of label categories we randomly select for the sample can be smaller than the actual number of samples.
If we select the number of tag categories as $V$,
refer to the entropy $\mathcal{H}(q_{\phi}(\rvx \mid \rvz))$ for \eqref{eq-3-2-2},
The maximum entropy of the fitted distribution is $\text{log }V$,
which is the case where the fitting distribution is completely indistinguishable from the uniform probability value of $\rvx$.
From $0 < \mathcal{I}_{q_{\phi}(\rvz,\rvx)} < \text{log }N$ (see Appendix for details),
The maximum value of the mutual information $\mathcal{I}_{q_{\phi}}$ 
and the maximum value of $\mathbb{E}_{q_{\phi}(\rvz)}\mathcal{H}(q_{\phi}(\rvx | \rvz))$ is same, i.e. $\text{log }N$.
The maximum mutual information 
that we can provide by using the $V$ tag category through the auxiliary softmax multi-classifier is $\text{log }V$.
Since the $\text{log}$ function grows very slowly,
selecting $V$ categories smaller than $N$ will not affect the VAE-AS's mutual information so much.

\section{Related Work}
The method for reducing the collapse of latent variables given in~\citep{dieng2018avoiding}
is also developed around adding the model's mutual information $\mathcal{I}_{q_{\phi}}(\rvx,\rvz)$.
The method is divided into two steps,
First, the Skip Network is used to 
make the latent variables participate in the calculation of each hidden layer output of the subsequent decoder.
Then the author also adds constraints of mutual information in the loss function.
The author discusses that the mutual information of Skip Network is greater than the mutual information of VAE, 
i.e. $\mathcal{I}_{q_{\phi}}^{SKIP-VAE}(\rvz,\rvx) \geq \mathcal{I}_{q_{\phi}}^{VAE}(\rvz,\rvx)$

The text~\citep{ma2019mae} optimizes 
the learning of the feature layer by adding regularization regular terms to the ELBO of the variation.
The author proposed mutual KL-Divergence between a pair of data to measure the diversity of posteriors.
$$MPD = \mathbb{E}_{\vx^{(1)},\vx^{(2)} \sim p(\rvx)}\left[ 
D_{KL}\left( q_{\phi}(\rvz \mid \vx^{(1)}) ||q_{\phi}(\rvz \mid \vx^{(2)}) \right) \right]$$
In addition to optimizing the ELBO of the VAE, the optimization goal also requires maximizing $MPD$.
In the article~\citep{ma2019mae}, the author gives the proof.
The mutual posterior diversity (MPD) is actually a symmetric version of the mutual information 
$\mathcal{I}_{q_{\phi}}(\rvz,\rvx)$.
Therefore, the method of~\citep{ma2019mae} is the same as~\citep{dieng2018avoiding},
both of them add regular items of mutual information based on VAE to improve the optimization effect.

~\citep{tolstikhin2017wasserstein, zhao2017infovae} mentions the use of Maximum Mean Discrepancy (MMD) 
to estimate the distance between distributions $q_{\phi}(\rvz)$ and $p_{\theta}(\rvz)$,
the focus remains on the estimate of $q_{\phi}(\rvz)$.
MMD uses a kernel function to estimate the distance between different sample points,
which is similar in form to the softmax's solution.

All of the above methods require the calculation of $q_{\phi}(\rvz)$ in the training batch.
If the batch\_size of the training batch is too small, 
the estimated bias of the mutual information $\mathcal{I}_{q_{\phi}}$ will be larger.
When the number of batches is large,
such as estimating the distribution difference by MMD,
if all $N$ samples are used as batch calculations,
The calculation of the $O(N^2)$ kernel function is required,
and the cost of the calculation is relatively large.

In this paper, VAE-AS removes the batch constraints by constructing a softmax classification network,
and uses the parameters of the classification network to save the differences between different sample codes.
Through the calculation of Hierarchical Softmax, we can effectively reduce the number of complex exponential operations.

\section{Empirical Results}
\subsection{Dataset}
To test the actual effect of VAE-AS, 
we selected two reference data sets of image, 
MNIST~\citep{lecun1998gradient} and Omniglot~\citep{lake2013one,burda2015importance}.
The MNIST data set includes 70,000 binary black and white handwritten digital images,
for MNIST, 55,000 for training and 10,000 for testing.
The Omniglot data set contains $1623$ different handwritten characters from 50 different alphabets. 
The training set has $24345$ samples, and test set has $8069$ samples.
Each characters was drawn online  by $20$ different people.
The image size of both data sets is $28 \times 28$

\subsection{Estimate Mutual Information and Marginal Divergence}
Firstly, we use of VAE-AS to estimate mutual information $\mathcal{I}_{q_{\phi}}(\rvz, \rvx)$
and marginal divergence $D_{KL}(q_{\phi}(\rvz)\ ||\ p_{\theta}(\rvz))$.
For the dataset MNIST~\citep{lecun1998gradient},
Construct encoders, decoders and classifiers using MLP,
the number of layers is $2$, $3$, $2$,
the number of hidden layers is chosen to be $500$,
the dimension of $\rvz$ is set to $40$, the mini-batches is set to $100$
and the iteration epochs is set to $120$.

From the model, for each sample $\vx^{(i)}$, 
$\mu(\vx^{(i)})$ and $\sigma(\vx^{(i)})$ is generated,
Using Monte Carlo method to estimate $\mathcal{I}_{q_{\phi}}(\rvz, \rvx)$ 
and $D_{KL}(q_{\phi}(\rvz)\ ||\ p_{\theta}(\rvz))$,
the method is same as ~\citep{hoffman2016elbo}.
Set sample number of Monte Carlo to $S$, 
sampling $55000, 10000, 5000, 1000, 500, 100$ to estimate mutual information and marginal divergence.

Set the number of softmax labels to the full sample size of training set of MNIST $55000$,
using VAE-AS to estimate mutual information $\mathcal{I}_{q_{\phi}}(\rvz, \rvx)$
and marginal divergence $D_{KL}(q_{\phi}(\rvz)\ ||\ p_{\theta}(\rvz))$.
The test is notted as $\text{VAE-AS}_{55000}$.
To compare the results, we set the number of softmax labels to $10000$,
performe a comparision test $\text{VAE-AS}_{10000}$.
The test results are shown in Figure~\ref{estimate-mi-md}:

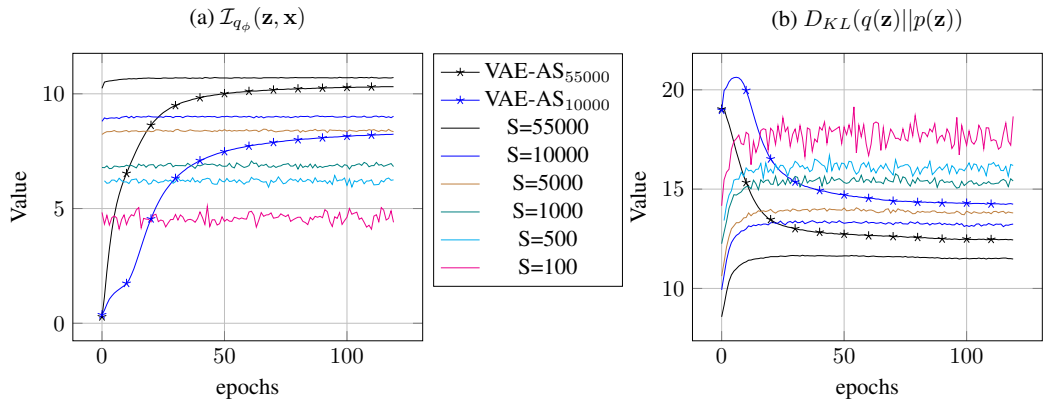
\begin{figure}[htbp]
\resizebox{1.0\textwidth}{!}{
\begin{tikzpicture}[baseline]
\begin{axis}[
title={(a) $\mathcal{I}_{q_{\phi}}(\rvz, \rvx)$},
xlabel={epochs},
ylabel={Value},
grid=major,
legend entries={{$\text{VAE-AS}_{55000}$}, {$\text{VAE-AS}_{10000}$}, S=55000, S=10000, S=5000, S=1000, S=500, S=100},
legend pos=outer north east,
]
\addplot [black, mark=star, mark repeat=10] table[col sep=comma, x=epochs, y=MI] {data/31vae-sm55000.test};
\addplot [blue, mark=star, mark repeat=10] table[col sep=comma, x=epochs, y=MI] {data/31vae-sm10000.test};
\addplot [black] table[col sep=comma, x=epochs, y=MI] {data/31vae-sm-eval-55000.test};
\addplot [blue] table[col sep=comma, x=epochs, y=MI] {data/31vae-sm-eval-10000.test};
\addplot [brown] table[col sep=comma, x=epochs, y=MI] {data/31vae-sm-eval-5000.test};
\addplot [teal] table[col sep=comma, x=epochs, y=MI] {data/31vae-sm-eval-1000.test};
\addplot [cyan] table[col sep=comma, x=epochs, y=MI] {data/31vae-sm-eval-500.test};
\addplot [magenta] table[col sep=comma, x=epochs, y=MI] {data/31vae-sm-eval-100.test};
\end{axis}
\end{tikzpicture}%
\begin{tikzpicture}[baseline]
\begin{axis}[
title={(b) $D_{KL}(q(\rvz) || p(\rvz))$},
xlabel={epochs},
ylabel={Value},
grid=major,
]
\addplot [black, mark=star, mark repeat=10] table[col sep=comma, x=epochs, y=MD] {data/31vae-sm55000.test};
\addplot [blue, mark=star, mark repeat=10] table[col sep=comma, x=epochs, y=MD] {data/31vae-sm10000.test};
\addplot [black] table[col sep=comma, x=epochs, y=MD] {data/31vae-sm-eval-55000.test};
\addplot [blue] table[col sep=comma, x=epochs, y=MD] {data/31vae-sm-eval-10000.test};
\addplot [brown] table[col sep=comma, x=epochs, y=MD] {data/31vae-sm-eval-5000.test};
\addplot [teal] table[col sep=comma, x=epochs, y=MD] {data/31vae-sm-eval-1000.test};
\addplot [cyan] table[col sep=comma, x=epochs, y=MD] {data/31vae-sm-eval-500.test};
\addplot [magenta] table[col sep=comma, x=epochs, y=MD] {data/31vae-sm-eval-100.test};
\end{axis}
\end{tikzpicture}
}
\caption{Estimate mutual information and marginal KL-Divergence.}
\label{estimate-mi-md}
\end{figure}

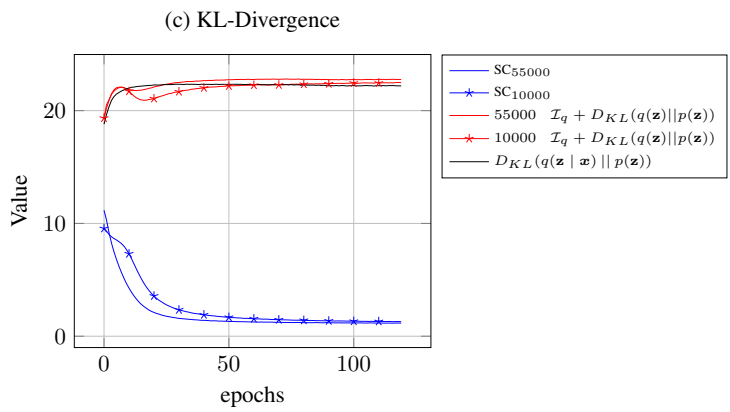
\begin{figure}[htbp]
\resizebox{0.7\textwidth}{!}{
\begin{tikzpicture}
\begin{axis}[
title=(c) KL-Divergence,
xlabel={epochs},
ylabel={Value},
grid=major,
legend entries=
{\tiny{SC${}_{55000}$},
\tiny{$\text{SC}_{10000}$}, 
\tiny{$55000\quad \mathcal{I}_{q}+D_{KL}(q(\rvz) || p(\rvz))$}, 
\tiny{$10000\quad \mathcal{I}_{q}+D_{KL}(q(\rvz) || p(\rvz))$},
\tiny{$D_{KL}(q(\rvz \mid \vx)\ ||\ p(\rvz))$}},
legend cell align=left,
legend pos=outer north east,
]
\addplot [blue] table[col sep=comma, x=epochs, y=SC] {data/31vae-sm55000.test};
\addplot [blue, mark=star, mark repeat=10] table[col sep=comma, x=epochs, y=SC] {data/31vae-sm10000.test};
\addplot [red] table[col sep=comma, x=epochs, y expr=\thisrow{MI}+\thisrow{MD}] {data/31vae-sm55000.test};
\addplot [red, mark=star, mark repeat=10] table[col sep=comma, x=epochs, y expr=\thisrow{MI}+\thisrow{MD}] {data/31vae-sm10000.test};
\addplot [black] table[col sep=comma, x=epochs, y=KL] {data/31vae-sm55000.test};
\end{axis}
\end{tikzpicture}
}
\caption{Estimate KL-Divergence.}
\label{estimate-kl}
\end{figure}

From the Figure~\ref{estimate-mi-md} we can see,
for the Monte Carlo method, 
as the number of samples drops,
whether the mutual information $\mathcal{I}_{q_{\phi}}(\rvz, \rvx)$
or marginal divergence $D_{KL}(q_{\phi}(\rvz)\ ||\ p_{\theta}(\rvz))$, 
will produce large bias.
And in the case of a small sample, 
there are a lot of fluctuations in the curve, 
which shows that Monte Carlo method will introduce a large variance due to sampling.
Generally limited by computing resources, we can't choose too much for mini-batch size during training.
Using VAE-AS to fit $q_{\phi}(\rvx = \vx^{(i)}\mid \rvz)$,
then estimate mutual information $\mathcal{I}_{q_{\phi}}(\rvz, \rvx)$
and marginal divergence $D_{KL}(q_{\phi}(\rvz)\ ||\ p_{\theta}(\rvz))$.
The estimated results are close to the large sample size Monte Carlo method.
Since VAE-AS only needs a large vocabulary,
the dimension of the classification label is small,
the demand for computation space can generally be satisfied.

The third figure depicts the sum of mutual information $\mathcal{I}_{q_{\phi}}(\rvz, \rvx)$
and marginal divergence $D_{KL}(q_{\phi}(\rvz)\ ||\ p_{\theta}(\rvz))$, 
compare with KL-Divergence $D_{KL}(q_{\phi}(\rvz \mid \rvx)\ ||\ p_{\theta}(\rvz))$.
It can be seen that the sum of mutual information and marginal divergence 
has a good convergence to the mean of KL-Divergence,
where SC is softmax cross-entropy.

\subsection{Results}
We tested the results of different parameters $\alpha$ and $\beta$ on VAE-AS optimization.
The encoder inference network $q_{\phi}(\rvz \mid \rvx)$ , the decoder generation network $p_{\theta}(\rvx \mid \rvz)$ 
and the classifier network $s_{\omega}(\rve \mid \rvx)$
are both constructed using multiple hidden layer neural networks MLP, 
the number of layers is $2$, $5$, $2$.
The number of hidden layers is chosen to be $500$.
the dimension of $\rvz$ is set to $40$, the mini-batches is set to $128$
and iteration epochs is set to $300$.
No other regular constraints are set.

In the test metrics, NLL${}_{test}$ is a non-negative likelihood based on the test set evaluation.
Using the method described in ~\citep{burda2015importance}, the number of samples is $4096$.
KL${}_{test}$ is the KL-Divergence $D_{KL}\left[ q_{\phi}(\rvz\mid \vx^{(i)})\ ||\ p_{\theta}(\rvz) \right ]$
based on the test set.
AU is the number of active units of the latent variable $\rvz$,
for this indicator we use the definition in ~\citep{burda2015importance}:
$$AU = \sum_{d=1}^{D} \mathbb{I}\{\text{Cov}_{p(\rvx)} 
\left( \mathbb{E}_{q_{\phi}(\rvz \mid \rvx)}[\rvz_d] \right) \geq \varepsilon \}$$
Where $\mathbb{I}(\cdot)$ is the indicator function, 
$D$ is the dimension of the latent variable $\rvz$, 
$\rvz_d$ is the element $d$ of $\rvz$, 
$\varepsilon$ is the judgment threshold and is set to $0.01$.
NNL${}_{train}$, KL${}_{train}$ is the reconstruction error and KL-Divergence based on the training set.
MI is mutual information $\mathcal{I}_{q_{\phi}}(\rvz, \rvx)$, 
MD is marginal divergence $D_{KL}(q_{\phi}(\rvz) || p_{\theta}(\rvz))$, estimated by VAE-AS during training.
SC is softmax cross-entropy.

\begin{table}[htb]
  \centering
  \caption{Adjust VAE-AS optimization effect by $\alpha$ and $\beta$}
  \label{tab2}
  \resizebox{1.0\linewidth}{!}{
  \begin{tabular}{l|c|c|c|c|c|c|c|c|c|c}
    \toprule[1pt]
    Method & $\alpha$ & $\beta$ & NNL${}_{test}$ & KL${}_{test}$ & AU & NNL${}_{train}$ & KL${}_{train}$ & MI & MD & SC\\ \hline
    VAE & - & - & 100.99 & 18.39 & 8 & 73.45 & 18.57 & - & - & -\\ 
    VAE-AS & 1.0 & 1.0 & 96.32 & 21.84 & 11 & 71.66 & 21.22 & 10.78 & 10.66 & 0.35 \\ \hline
    VAE-AS & 1.0 & 0.1 & 116.61 & 64.70 & 36 & 58.39 & 65.70 & 10.86 & 54.89 & 0.11 \\
    VAE-AS & 1.0 & 0.2 & 106.92 & 46.99 & 25 & 61.16 & 45.14 & 10.87 & 34.35 & 0.12 \\
    VAE-AS & 1.0 & 0.5 & 96.63 & 29.52 & 18 & 65.64 & 29.40 & 10.83 & 18.71 & 0.22 \\
    VAE-AS & 1.0 & 2.0 & 101.34 & 15.68 & 7 & 81.61 & 15.87 & 10.58 & 5.78 & 0.82 \\ 
    VAE-AS & 1.0 & 5.0 & 141.38 & 5.35 & 2 & 145.04 & 5.56 & 3.34 & 2.27 & 7.62 \\ \hline
    VAE-AS & 0.1 & 1.0 & 97.22 & 21.53 & 11 & 72.13 & 21.22 & 10.80 & 10.63 & 0.31 \\
    VAE-AS & 0.2 & 1.0 & 97.51 & 21.23 & 10 & 71.84 & 20.92 & 10.81 & 10.32 & 0.31 \\
    VAE-AS & 0.5 & 1.0 & 98.23 & 21.57 & 10 & 72.23 & 20.91 & 10.80 & 10.32 & 0.33 \\
    VAE-AS & 2.0 & 1.0 & 96.56 & 21.13 & 11 & 71.85 & 20.99 & 10.75 & 10.48 & 0.41 \\
    VAE-AS & 5.0 & 1.0 & 97.33 & 20.38 & 10 & 75.94 & 19.79 & 10.44 & 9.80 & 0.93 \\ 
    VAE-AS & 10.0  & 1.0 & 104.69 & 21.84 & 10 & 93.05 & 20.28 & 9.00 & 12.60 & 3.23 \\
    \bottomrule[1pt]
  \end{tabular}}
\end{table}

The test results are shown in the Table~\ref{tab2}.
The test is divided into two phases.
We fixed $\alpha=1$ and tested the effect of $\beta$ adjustment on model training and testing.
In the second phase, fix $\beta=1$, test the effect of the change of $\alpha$.

In VAE, the marginal divergence $D_{KL}(q_{\phi}(\rvz)\ ||\ p_{\theta}(\rvz))$
controls the overall shape of the latent variable distribution.
The larger $D_{KL}(q_{\phi}(\rvz)\ ||\ p_{\theta}(\rvz))$, 
the more scattered the latent variable distribution of $\rvz$,
i.e. the larger the variance of the latent variable distribution, 
more conducive to the training of the model.
From Table~\ref{tab2} we can see, if the mutual information $\mathcal{I}_{q_{\phi}}(\rvz, \rvx)$ is close, 
Increasing $D_{KL}(q_{\phi}(\rvz)\ ||\ p_{\theta}(\rvz))$is more advantageous for optimizing the reconstruction error.
But if the difference between the encoder marginal distribution and the prior distribution is far away,
Sampling and Calculating based prior distribution is poor quality.

When $\beta$ is fixed to $1$,
it can be seen from the Table~\ref{tab2} that the change in $\alpha$ affects the performance of the model generalization.
As can be seen from the \eqref{eq-3-2-0},
the size of the mutual information $\mathcal{I}_{q_{\phi}}(\rvz, \rvx)$ depends on
entropy $\mathcal{H}\left( q_{\phi}(\rvx \mid \rvz) \right)$, 
and $\mathcal{H}\left( q_{\phi}(\rvx \mid \rvz) \right) $ depends 
on degree of overlap between two encoding conditional distribution 
$q_{\phi}(\rvz \mid \vx^{(i)})$ and $q_{\phi}(\rvz \mid \vx^{(j)})$.
The overlap is more directly related to the variance of the encoder output $\sigma(\vx)$.
The smaller the $\sigma(\vx)$, the smaller the degree of overlap, 
the greater the mutual information.
So in the Table~\ref{tab2}, if $D_{KL}(q_{\phi}(\rvz)\ ||\ p_{\theta}(\rvz))$ is close. 
The larger the mutual information $\mathcal{I}_{q_{\phi}}(\rvz, \rvx)$,
the smaller the reconstruction error of the training.
But the decrease in training error caused by the increase of $\mathcal{I}_{q_{\phi}}(\rvz, \rvx)$ 
is only for one single sample.
Then it can't be reproduced on the test set, which is actually a case of overfitting.
If $\alpha$ is small, 
The model is more inclined to remember the appearance of each training sample than to observe the common law in the sample.
This is due to small overlap of the latent variable distribution $q_{\phi}(\rvz\mid \vx)$ caused by mutual information is large,
the decoder can not abstract and summarize the rules.
According to the information bottleneck theory~\citep{alemi2016deep},
we should minimize the mutual information to optimize model generalization while achieving the optimization goal.
However, the larger $\alpha$ value,
the mutual information $\mathcal{I}_{q_{\phi}} (\rvz, \rvx)$ included in the latent variable space is smaller,
and the number of active encodings is smaller,
the problem of posterior collapse is more likely to occur.
This is because the strong mutual information constraint limits the optimization threshold of the encoder.

\begin{figure}[htbp]
\centering
\subfigure[$\alpha=0.2$]{
\begin{minipage}[t]{0.3\linewidth}
\centering
\includegraphics[width=1.0\linewidth]{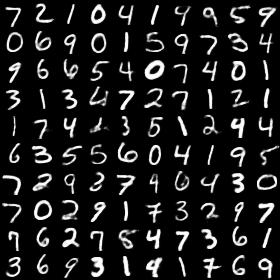}
\end{minipage}%
}%
\subfigure[Test Set]{
\begin{minipage}[t]{0.3\linewidth}
\centering
\includegraphics[width=1.0\linewidth]{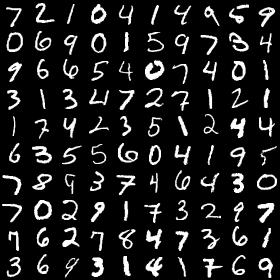}
\end{minipage}%
}%
\subfigure[$\alpha=5.0$]{
\begin{minipage}[t]{0.3\linewidth}
\centering
\includegraphics[width=1.0\linewidth]{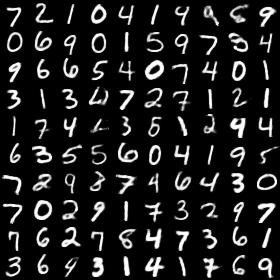}
\end{minipage}
}%

\subfigure[$\alpha=0.2$]{
\begin{minipage}[t]{0.3\linewidth}
\centering
\includegraphics[width=1.0\linewidth]{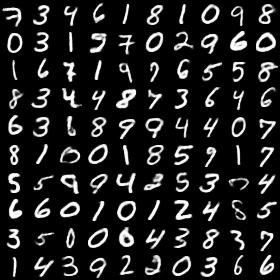}
\end{minipage}%
}%
\subfigure[Train Set]{
\begin{minipage}[t]{0.3\linewidth}
\centering
\includegraphics[width=1.0\linewidth]{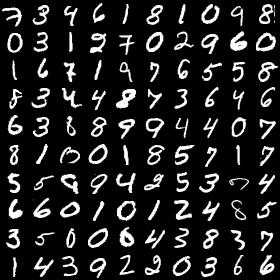}
\end{minipage}%
}%
\subfigure[$\alpha=5.0$]{
\begin{minipage}[t]{0.3\linewidth}
\centering
\includegraphics[width=1.0\linewidth]{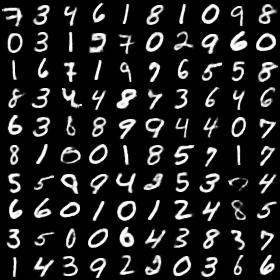}
\end{minipage}
}%
\centering
\caption{Compare the effects of image reconstruction when $\alpha=0.2$ and $\alpha=5.0$. 
(a)$\alpha=0.2$, reconstruction image in Test sets (b)original image in Test sets
(c)$\alpha=5.0$, reconstruction image in Test sets (d) $\alpha=0.2$, reconstruction image in train sets 
(e) original image in train sets (f) $\alpha=5.0$, reconstruction image in train sets}
\label{fig-5-2}
\end{figure}

The effect of $\alpha$ on the model can also be seen in the Figure~\ref{fig-5-2}.
The smaller $\alpha$ has less restriction on mutual information,
for training set samples, the model tends to reconstruct more accurate images.
Compared with the original image (e), 
the training reconstruction errors of $\alpha=0.2$ and $\alpha=5.0$ are both small, 
but the graph (d) shows more subtle differences in handwritten fonts than in graph (f). 
But obviously it will cause more test error, 
so the clarity of Figure (a) is higher than that of Figure (c),
but there are some obvious errors.
This is because when the mutual information is large, 
the $q_{\phi}(\rvz \mid \rvx)$ generated by different $\vx$ sample codes overlaps very little, 
and the model tends to restore the closest sample in the training set.
The reconstruction details that can be expressed in the training set are not reproducible in the test set, 
which is obviously an over-fitting situation.
Thus, a smaller $\alpha$ value will result in a sharper picture, 
avoiding the problem of the VAE model blurry.
However, it will result in poor generalization performance of the model.
This can be seen as a trade-off. 
If the continuity of ground-truth distribution is poor, 
then the model should choose a smaller $\alpha$ to get more mutual information in the latent variable space.
If the continuity of ground-truth distribution is better,
or if we need to interpolate in the discrete sampling training set,
then we need mutual information $\mathcal{I}_{q_{\phi}}(\rvz, \rvx)$ to be given more restrictions.

In the training process of VAE-AS, the values of $\mathcal{I}_{q_{\phi}}(\rvz, \rvx)$
and $D_{KL}(q_{\phi}(\rvz)\ ||\ p_{\theta}(\rvz))$ have mutual influence.
As can be seen from the Table~\ref{tab2}, if $\beta >1$,
The constraint requires that the latent variable distribution $q_{\phi}(\rvz)$ 
is closer to $p_{\theta}(\rvz)$, which is generally a standard normal distribution.
Often the overall shape of the data distribution will be more concentrated.
Then the overlap of latent variable distribution $q_{\phi}(\rvz \mid \rvx)$ is increased
due to the extrusion of the $\rvz$ latent variable space, mutual information is thus reduced.
With the decline of mutual information, the accuracy of decoder reconstruction will also decrease.
And with decrease of $\beta$, 
the contraint of marginal divergence $D_{KL}(q_{\phi}(\rvz)\ ||\ p_{\theta}(\rvz))$ is gradually relaxed.
The distribution of $q_{\phi}(\rvz)$
is more likely to be optimized according to the decoder's requirement to reduce the reconstruction error.
There will also be an increase in mutual information in the training.

In the Table~\ref{tab2}, when $\alpha$ is in the range of values from $0.1$ to $2.0$,
due to the constraint of mutual information upper bound, the change of mutual information is small.
When the $\alpha$ is small, although the mutual information
$\mathcal{I}_{q_{\phi}}(\rvz, \rvx)$ is already close to the upper limit bound $\text{log }N$,
But because the constraints are looser, encoder easily reduces the output variance $\sigma(\vx)$.
VAE-AS can maintain mutual information while reducing marginal divergence.
However, VAE-AS needs to be optimized for training in a smaller encoding space, 
so the training will be more difficult and the number of iterations of training will be more.


From the test results, 
we can see that the marginal KL-Divergence controls the overall distribution shape of the code distribution,
which determines the upper limit precision that the model can achieve.
Mutual information determines the degree of overlap between different latent variable, 
or the degree of smoothness of distribution between latent variable,
to a certain extent it controls the degree of overfitting.


\section{Conclusion}
This paper analyzes the reasons why VAE has a posterior collapse problem during training in detail.
A method of randomly assigning labels to samples and identifying labels by auxiliary softmax multi-classifiers 
is proposed to control mutual information of models.

Compared with image processing, posterior collapse is more likely to appear in the field of natural language process.
In NLP processing, decoding networks such as RNN are more widely used.
Therefore we will further test the effect of VAE-AS on NLP and RNN.
VAE-AS is able to accurately estimate mutual information and marginal divergence in training batches,
this facilitates many other machine learning tasks.
The information bottleneck theory discusses in detail the optimization process of mutual information. 
VAE is often used for disentanglement, which requires fine control of mutual information.
Subsequent we will use VAE-AS to conduct further research in these areas.



\bibliography{vae-as-cn.bib}
\bibliographystyle{iclr2019_conference}

\section{Appendix}
\subsection{ELBO Rewrite}
For the sample ${\vx^{(1)}, \cdots,\vx^{(N)}}$, consider $q_{\phi}(\rvx=\vx^{(i)})=\frac{1}{N}$.
\begin{align*}
    &\quad\  \mathcal{I}_{q_{\phi}}(\rvz,\rvx) \\ 
    &= \sum_{i=1}^{N} \int q_{\phi}(\rvz \mid \vx^{(i)})q_{\phi}(\vx^{(i)})\text{log}\left[ 
    \frac{q_{\phi}(\rvz \mid \vx^{(i)})q_{\phi}(\vx^{(i)})}{q_{\phi}(\vx^{(i)})q_{\phi}(\rvz)}\right] \text{d}\rvz \\
    &= \frac{1}{N}\sum_{i=1}^{N} \int q_{\phi}(\rvz \mid \vx^{(i)}) 
    \text{log}\frac{q_{\phi}(\rvz \mid \vx^{(i)})}{q_{\phi}(\rvz)} \text{d}\rvz \\ 
    \\
    &\quad\  \mathcal{I}_{q_{\phi}}(\rvx,\rvz) + D_{KL}\left( q_{\phi}(\rvz) || p_{\theta}(\rvz) \right) \\
    &= \mathcal{I}_{q_{\phi}}(\rvx,\rvz) + 
    \int q_{\phi}(\rvz) \text{log} \frac{q_{\phi}(\rvz)}{p_{\theta}(\rvz)} \text{d}\rvz \\
    &= \mathcal{I}_{q_{\phi}}(\rvx,\rvz) + \frac{1}{N} \sum_{i=1}^{N} \int q_{\phi}(\rvz \mid \vx^{(i)}) 
     \text{log} \frac{q_{\phi}(\rvz)}{p_{\theta}(\rvz)} \text{d}\rvz \\
     &= \frac{1}{N}\sum_{i=1}^{N}\int q_{\phi}(\rvz \mid \vx^{(i)}) 
     \text{log}\frac{q_{\phi}(\rvz \mid \vx^{(i)})}{p_{\theta}(\rvz)} \text{d}\rvz \\
     &= \frac{1}{N} \sum_{i=1}^{N} D_{KL}\left( q_{\phi}(\rvz \mid \vx^{(i)}) || p_{\theta} (\rvz) \right)
\end{align*}
\subsection{Mutual information value range}
\begin{align*}
    \mathcal{I}_{q_{\phi}}(\rvz,\rvx) &= \mathbb{E}_{q_{\phi}(\rvx,\rvz)}\left[ \text{log } 
    \frac{q_{\phi}(\rvx,\rvz)}{q_{\phi}(\rvx)q_{\phi}(\rvz)} \right] \\
    &= \mathbb{E}_{q_{\phi}(\rvz)}\left[ \int q_{\phi}(\rvx \mid \rvz) 
    \text{log } \frac{q_{\phi}(\rvx \mid \rvz)}{q_{\phi}(\rvx)} \text{d}\rvx \right] \\
&= \text{log }N - \mathbb{E}_{q_{\phi}(\rvz)} \left [\mathcal{H}\left( q_{\phi}(\rvx \mid \rvz) \right) \right ]
\end{align*}
When $\rvx$ is independent with $\rvz$,
$\mathbb{E}_{q_{\phi}(\rvz)}\left[ \mathcal{H}\left( q_{\phi}(\rvx \mid \rvz) \right) \right]= 
\mathcal{H}\left( q_{\phi}(\rvx) \right) = \text{log }N$, 所以$\mathcal{I}_{q_{\phi}}(\rvz,\rvx) =0$.
When $\rvx$ is mapped one-to-one with $\rvz$, 
$q_{\phi}(\rvx\mid \rvz)=1$, $\mathcal{H}_{q_{\phi}}(\rvx \mid \rvz)=0$, 
$\mathcal{I}_{q_{\phi}}(\rvz,\rvx) =\text{log }N$.
Thus,
$$0 \leq \mathcal{I}_{q_{\phi}}(\rvz,\rvx) = \text{log }N 
- \mathbb{E}\left[ \mathcal{H}\left( q_{\phi}(\rvx \mid \rvz) \right) \right]
\leq \text{log }N$$

\subsection{Results for Omniglot}
\begin{table}[hp]
  \centering
  \caption{Results of Omniglot Dataset}
  \label{tab3}
  \resizebox{1.0\linewidth}{!}{
  \begin{tabular}{l|c|c|c|c|c|c|c|c|c|c}
    \toprule[1pt]
    Method & $\alpha$ & $\beta$ & NNL${}_{test}$ & KL${}_{test}$ & AU & NNL${}_{train}$ & KL${}_{train}$ & MI & MD & SC\\ \hline
    VAE & - & - & 147.21 & 9.33 & 3 & 133.70  & 9.25 & - & - & -\\ \hline
    VAE-AS & 1.0 & 0.1 & 158.54 & 81.14 & 40 & 61.31 & 82.70 & 10.10 & 72.61 & 0.00 \\
    VAE-AS & 1.0 & 0.2 & 144.05 & 61.90 & 34 & 63.96 & 61.59 & 10.10 & 51.50 & 0.01 \\
    VAE-AS & 1.0 & 0.5 & 133.44 & 37.22 & 21 & 70.27 & 37.59 & 10.09 & 27.52 & 0.02 \\
    VAE-AS & 1.0 & 1.0 & 140.08 & 22.81 & 10 & 83.91 & 22.56 & 10.07 & 12.53 & 0.07 \\
    VAE-AS & 1.0 & 2.0 & 146.47 & 8.38 & 3 & 135.03 & 8.56 & 6.17 & 2.56 & 4.10 \\ 
    VAE-AS & 1.0 & 5.0 & 161.17 & 2.35 & 1 & 161.56 & 2.66 & 1.83 & 0.84 & 8.28 \\ 
    VAE-AS & 1.0 & 10.0 & 174.49 & 1.46 & 0 & 177.22 & 2.09 & 0.59 & 1.55 & 9.56 \\ \hline 
    VAE-AS & 0.1 & 1.0 & 136.82 & 23.23 & 11 & 82.03 & 23.45 & 10.07 & 13.41 & 0.06 \\
    VAE-AS & 0.2 & 1.0 & 140.38 & 21.17 & 9 & 86.22 & 21.49 & 10.07 & 11.45 & 0.06 \\
    VAE-AS & 0.5 & 1.0 & 137.52 & 21.57 & 10 & 84.68 & 22.13 & 10.07 & 12.10 & 0.07 \\
    VAE-AS & 2.0 & 1.0 & 140.60 & 21.16 & 9 & 86.02 & 21.27 & 10.06 & 11.25 & 0.08 \\
    VAE-AS & 5.0 & 1.0 & 133.10 & 20.97 & 10 & 86.42 & 21.31 & 10.04 & 11.35 & 0.13 \\ 
    VAE-AS & 10.0  & 1.0 & 175.63 & 41.15 & 7 & 145.96 & 39.76 & 4.22 & 36.84 & 7.18 \\
    \bottomrule[1pt]
  \end{tabular}}
\end{table}

\begin{figure}[htbp]
\centering
\subfigure[$\alpha=0.2$]{
\begin{minipage}[t]{0.3\linewidth}
\centering
\includegraphics[width=1.0\linewidth]{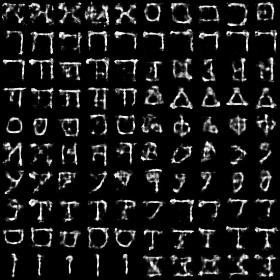}
\end{minipage}%
}%
\subfigure[Test Set]{
\begin{minipage}[t]{0.3\linewidth}
\centering
\includegraphics[width=1.0\linewidth]{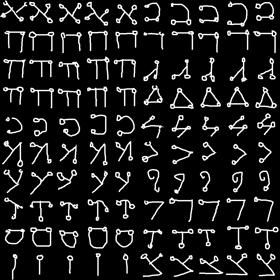}
\end{minipage}%
}%
\subfigure[$\alpha=5.0$]{
\begin{minipage}[t]{0.3\linewidth}
\centering
\includegraphics[width=1.0\linewidth]{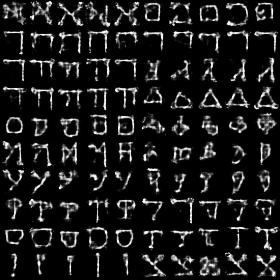}
\end{minipage}
}%

\subfigure[$\alpha=0.2$]{
\begin{minipage}[t]{0.3\linewidth}
\centering
\includegraphics[width=1.0\linewidth]{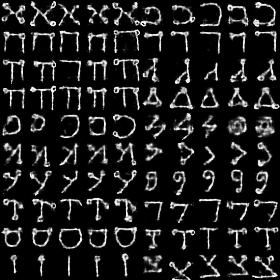}
\end{minipage}%
}%
\subfigure[Train Set]{
\begin{minipage}[t]{0.3\linewidth}
\centering
\includegraphics[width=1.0\linewidth]{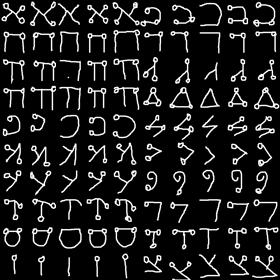}
\end{minipage}%
}%
\subfigure[$\alpha=5.0$]{
\begin{minipage}[t]{0.3\linewidth}
\centering
\includegraphics[width=1.0\linewidth]{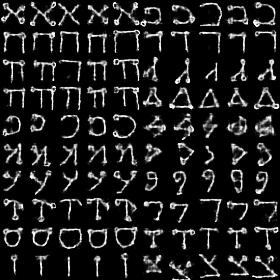}
\end{minipage}
}%
\centering
\caption{Compare the effects of image reconstruction for Omniglot dataset when $\alpha=0.2$ and $\alpha=5.0$. 
(a)$\alpha=0.2$, reconstruction image in Test sets (b)original image in Test sets
(c)$\alpha=5.0$, reconstruction image in Test sets (d) $\alpha=0.2$, reconstruction image in train sets 
(e) original image in train sets (f) $\alpha=5.0$, reconstruction image in train sets}
\label{fig-7-1}
\end{figure}

\end{document}